\documentclass[sigconf]{acmart}
\usepackage{amsmath}
\usepackage{amsthm}
\usepackage{color}
\usepackage{natbib}

\newtheorem{thm}{Theorem}
\numberwithin{equation}{section}

\AtBeginDocument{
  }

\copyrightyear{2026}
\acmYear{2026}
\setcopyright{cc}
\setcctype{by}
\acmConference[KDD '26]{Proceedings of the 32nd ACM SIGKDD Conference on Knowledge Discovery and Data Mining V.2}{August 09--13, 2026}{Jeju Island, Republic of Korea}
\acmBooktitle{Proceedings of the 32nd ACM SIGKDD Conference on Knowledge Discovery and Data Mining V.2 (KDD '26), August 09--13, 2026, Jeju Island, Republic of Korea}
\acmDOI{10.1145/3770855.3817659}
\acmISBN{979-8-4007-2259-2/2026/08}

\begin{document}

\title{A More Accurate Algorithm Comparison through A/B Testing using Offline Evaluation Methods}

\author{Koki Konishi}
\affiliation{
  \institution{Hakuhodo DY Holdings Inc.}
  \city{Minato City}
  \state{Tokyo}
  \country{Japan}}
\email{koki.konishi@hakuhodo.co.jp}

\author{Masataka Ushiku}
\affiliation{
  \institution{Hakuhodo DY Holdings Inc.}
  \city{Minato City}
  \state{Tokyo}
  \country{Japan}}
\email{masataka.ushiku@hakuhodo.co.jp}

\author{Yuta Saito}
\affiliation{
  \institution{Cornell University}
  \city{Ithaca}
  \state{New York}
  \country{United States}}
\email{ys522@cornell.edu}

\renewcommand{\shortauthors}{Koki Konishi, Masataka Ushiku, \& Yuta Saito}

\begin{abstract}
A/B testing is the gold standard for selecting better algorithms in online services. While offline evaluation has attracted attention as a safer alternative due to the high experimental costs and the potential risk of degrading user experience and revenue in A/B testing, it is widely recognized that the estimation accuracy of offline evaluation is substantially lower than that of A/B testing. As a result, final decisions on algorithm selection are typically made through A/B testing.
Contrary to this conventional view, we reveal a counterintuitive phenomenon in which A/B testing can produce a higher algorithm selection error rate than offline evaluation. This occurs because the sample mean estimator used in A/B testing does not induce positive correlation, which plays a crucial role in reducing critical selection errors, namely underestimating the truly superior algorithm and overestimating the truly inferior one. In contrast, offline evaluation methods unintentionally generate this beneficial correlation by relying on shared offline data when estimating and comparing the performance of multiple algorithms.
Building on this insight, we propose a novel estimator that intentionally induces positive correlation to improve algorithm selection in A/B testing. The key idea is to introduce a hypothetical middle algorithm and to estimate the performance difference between algorithms A and B in a stepwise manner, first between A and the middle algorithm and then between the middle algorithm and B, using shared data at each step. This approach enables the application of offline evaluation techniques in each step, thereby inducing positive correlation and reducing critical selection errors. Furthermore, we derive the optimal middle algorithm regarding the resulting variance and analyze its advantages over existing methods through bias-variance analysis. Experiments on real-world data demonstrate that the proposed estimator achieves the same selection error rate as existing approaches while using only one half of the A/B testing data, indicating a twofold improvement in sample efficiency.
\end{abstract}

\begin{CCSXML}
<ccs2012>
   <concept>
       <concept_id>10002950.10003648.10003688</concept_id>
       <concept_desc>Mathematics of computing~Statistical paradigms</concept_desc>
       <concept_significance>500</concept_significance>
       </concept>
 </ccs2012>
\end{CCSXML}

\ccsdesc[500]{Mathematics of computing~Statistical paradigms}

\keywords{A/B Testing; Offline Evaluation; Off-Policy Evaluation}

\maketitle

\newcommand\kddavailabilityurl{https://doi.org/10.5281/zenodo.20482776}
\ifdefempty{\kddavailabilityurl}{}{
\begingroup\small\noindent\raggedright\textbf{Resource Availability:}\\
The source code of this paper has been made publicly available at \url{\kddavailabilityurl}.
\endgroup
}

\begin{figure}[ht]
    \centering
    \includegraphics[width=0.80\linewidth]{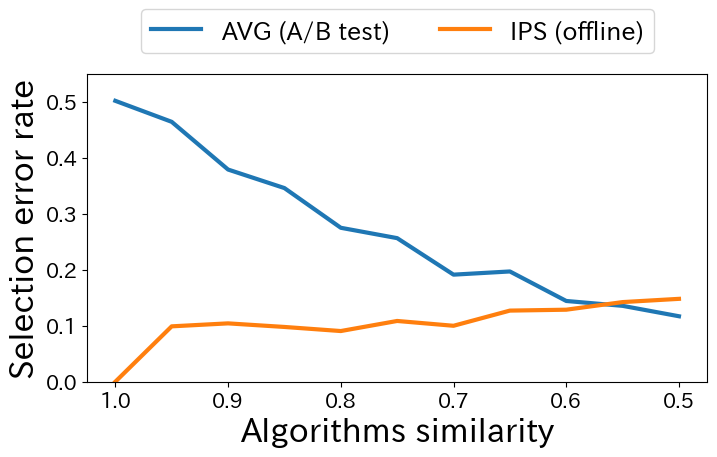}
    \vspace{-3mm}
    \caption{Comparison of selection error rates between A/B testing and offline evaluation.}
    \vspace{-5mm}
    \label{fig:pre-experiment-error}
\end{figure}

\section{Introduction}
Data-driven decision making is essential for many online services, such as recommender systems, search engines, and digital advertising.
These decision-making problems are commonly formulated using the Contextual Bandit framework~\cite{Li2011-jb, Durand2018-qg, Gruson2019-es}.
In the development of these algorithms, it is crucial to be able to identify the better-performing algorithm based on key metrics (e.g., revenue, click-through rate).
In this context, A/B testing is considered the gold standard~\cite{Gomez-Uribe2016-dw, Larsen2023-ju}.
The A/B testing process is divided into two phases: \textit{data collection} and \textit{estimation}~\cite{Kohavi2012-wk}.
In the data collection phase, the system randomly assigns users to multiple groups (for example, group A and group B) in order to isolate the effect of algorithmic differences, and it collects interaction logs from each group.
In the estimation phase, the system typically computes the sample mean of the outcomes or rewards for groups A and B and then compares these means to identify the better.
In this paper, we refer to the data collected from both groups A and B as A/B testing data and denote the sample mean as the AVG estimator.

A/B testing is a powerful and reliable algorithm selection method, but it is not perfect. 
If the tested algorithm is of low quality, it carries the risk of harming user experience and revenue during the experimental period.
For that reason, offline evaluation has gained attention as a safe alternative, as it uses only historical log data, which we refer to as offline data, without deploying a new algorithm~\cite{Gruson2019-es, Castells2022-jr, gilotte2018offline}.
A representative method, namely the Inverse Propensity Scoring (IPS) estimator~\cite{Horvitz1952-ws, precup2000eligibility}, unbiasedly estimates the performance of a new algorithm by statistically re-weighting the observed data.
However, it is widely recognized that the estimation accuracy of offline evaluation is far inferior to that of A/B testing, which collects new data in an online environment~\cite{gilotte2018offline, saito2022counterfactual}.
Therefore, offline evaluation typically performs preliminary screening, and A/B testing often makes the final decision on algorithm selection~\cite{Gomez-Uribe2016-dw}.

\begin{table*}
    \centering
    \caption{Proportion of incorrect selections for each quadrant out of the total trials at a similarity of 0.80.}
    \vspace{-3mm}
    \begin{tabular}{c|c|c|c|c}
        & overestimates for algo A \& & overestimates for algo A \& & underestimates for algo A \& & underestimates for algo A \& \\
        & overestimates for algo B & underestimates for algo B & overestimates for algo B & underestimates for algo B \\
        \hline
        AVG & 3.71\% & \textbf{20.26\%} & 0\% & 3.52\% \\
        IPS & 1.15\% & \textbf{3.39\%} & 0\% & 4.51\%
    \end{tabular}
    \label{tab:plot-num}
\end{table*}

\paragraph{\textbf{Pre-experiment: online vs. offline in algorithm selection}}
This research is motivated by the discovery of an interesting phenomenon in our pre-experiment that questions the conventional understanding that "A/B testing is the gold standard".
In that pre-experiment, we built an A/B testing environment and then selected the better algorithm using two methods: AVG using A/B testing data, and IPS using only offline data from algorithm A.
The primary metric for this comparison is the selection error rate.
This rate is defined as the proportion of 10,000 trials in which the estimator incorrectly identified the truly worse algorithm as the better one based on a finite sample.
As the true performance of algorithm B is higher in this experiment, a mismatch occurs when the estimators incorrectly select algorithm A.
Figure \ref{fig:pre-experiment-error} shows the comparison results of the selection error rate for each method.
The horizontal axis represents the similarity between the two algorithms\footnote{The parameter $\mu_{rate}$ ranges from 0.0 to 1.0; a larger $\mu_{rate}$ means the algorithm tends to select actions with higher expected rewards. In this paper, the similarity between the two algorithms is defined as $1 - |\mu_{rate_A} - \mu_{rate_B}|$}.
The results indicate that \textbf{AVG using A/B testing data, which should be the gold standard, is inferior to IPS using offline data in terms of selection error rate in most cases.}

The key to understanding this counterintuitive phenomenon is Figure \ref{fig:correlation}.
It visualizes scatter plots of the estimates of algorithm performance based on 10,000 trials with varying random seeds at an algorithm similarity of 0.80.
\begin{figure}
    \centering
    \includegraphics[width=0.95\linewidth]{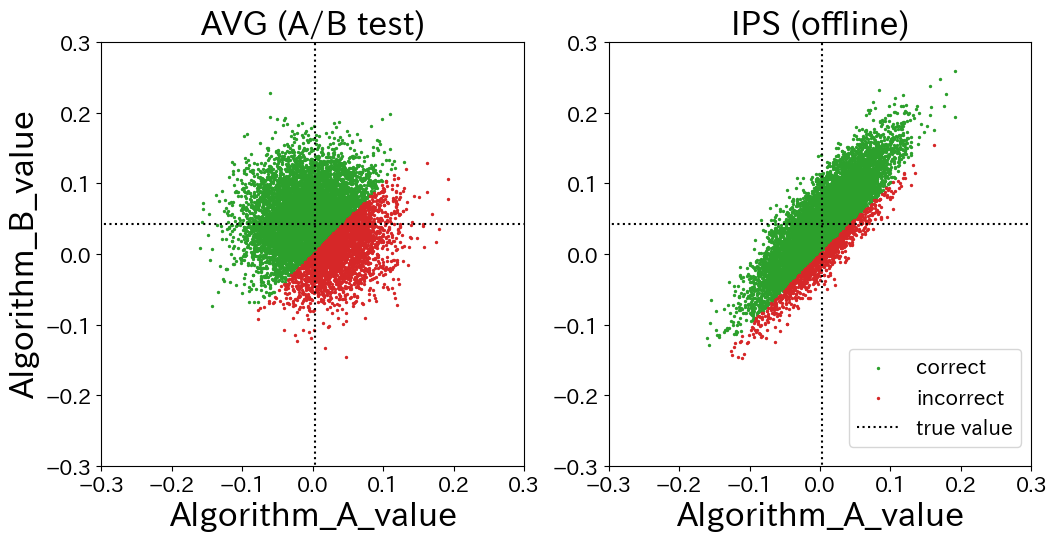}
    \vspace{-3mm}
    \caption{The scatter plot of algorithm performance values estimated by AVG and IPS.}
    \label{fig:correlation}
    \vspace{-5mm}
\end{figure}
The horizontal and vertical axes represent the estimates for algorithms A and B, respectively.
The diagonal line $y=x$ represents the decision boundary: points above this indicate that the estimates for algorithm B are higher than those for algorithm A.
In this pre-experiment, since the true performance of algorithm B is higher, the points above the boundary line are the correct selections.
We plotted the correct selections as green dots and the incorrect selections as red dots.
The scatter plot reveals that the vertical dispersion of AVG is smaller than that of IPS, indicating superior stability in evaluating individual algorithms.
However, despite this lower variance, the proportion of incorrect selections is 27.49\% for AVG, compared to only 9.05\% for IPS.
The shape of the plot reveals the reason for this discrepancy.
Specifically, the plot of AVG is circular due to the estimates derived from independent A/B testing data, whereas the plot of IPS is elliptical, indicating a positive correlation arising as a byproduct of using shared offline data.
Table \ref{tab:plot-num} details the specific types of selection error rates.
Notably, AVG generates more trials in the bottom-right quadrant, namely the underestimation of the truly better algorithm B and the overestimation of the truly worse algorithm A, which leads to incorrect selections.
This also indicates that the positive correlation, a byproduct of offline evaluation, avoids these critical errors.
This implication is not merely the advantage of offline evaluation; rather, AVG fails to leverage its available A/B testing data effectively.

\paragraph{\textbf{Contributions}}
The primary objective of this work is to propose a novel estimator, referred to as the Middle-In-Difference (MID) estimator, that selects the superior algorithm in A/B testing with a lower selection error rate than the conventional AVG estimator.
Moreover, as an algorithm selection method designed for A/B testing, MID is naturally expected to outperform offline evaluation methods in terms of selection accuracy, even though such a guarantee is not achieved by AVG.
MID fully exploits A/B testing data by intentionally inducing positive correlation within the A/B testing framework.
As depicted in Figure~\ref{fig:proposed-method}, we introduce a hypothetical middle algorithm M positioned between algorithms A and B.
We then estimate the performance differences between A and M, and between M and B, using offline evaluation techniques, and aggregate these estimates to obtain the performance difference between A and B.
\begin{figure}
    \centering
    \includegraphics[width=\linewidth]{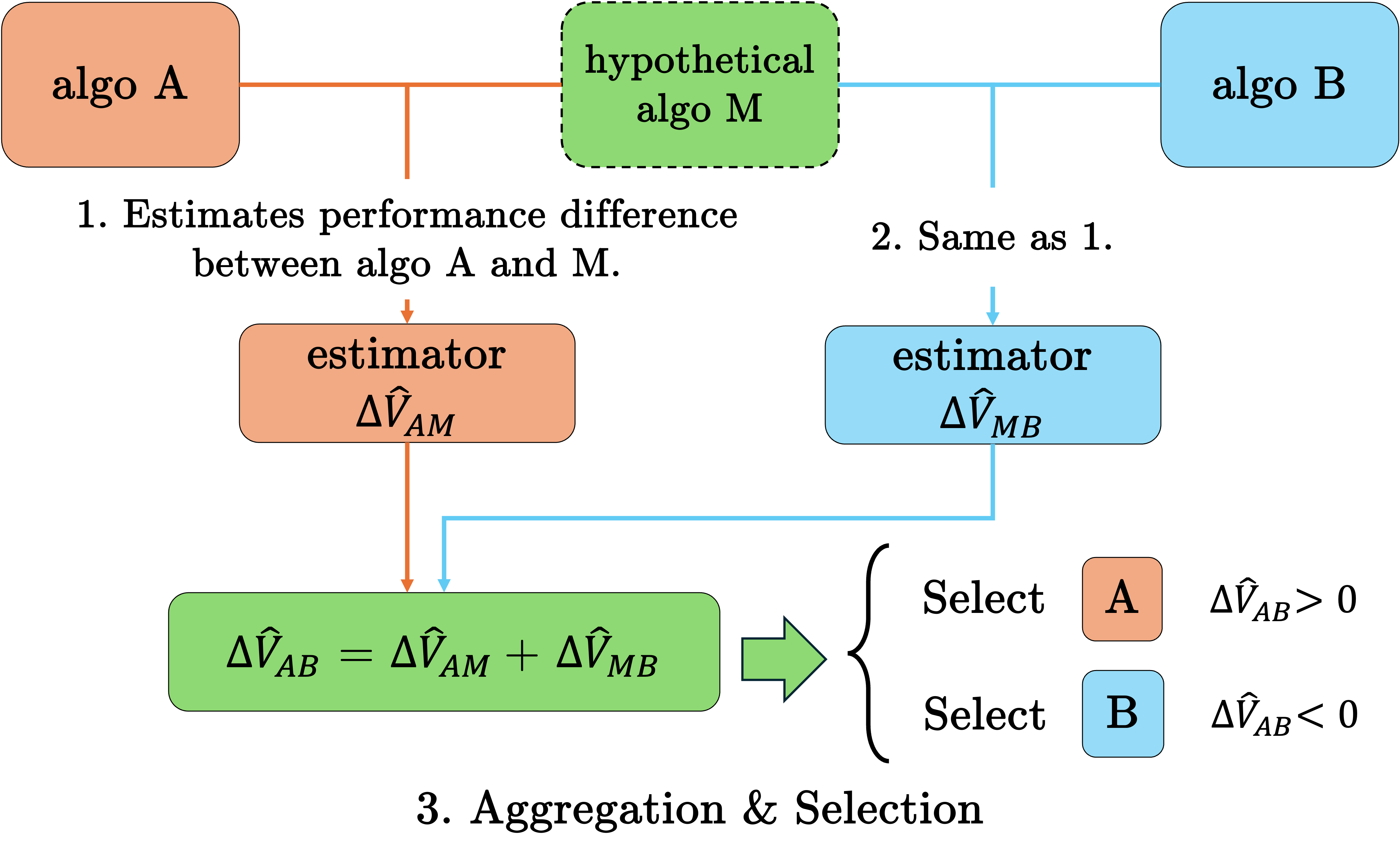}
    \vspace{-5mm}
    \caption{Diagram of proposed method: Stepwise estimation of the algorithm performance via a middle algorithm.}
    \vspace{-5mm}
    \label{fig:proposed-method}
\end{figure}
From a theoretical perspective, we derive the optimal middle algorithm M that minimizes the variance, thereby reducing the algorithm selection error rate. Moreover, through a detailed bias--variance analysis, we demonstrate the statistical mechanism that underlies the superior performance of MID compared to existing estimators based on either A/B testing data or offline data.

To empirically validate the effectiveness of MID, we conduct comparative experiments using real-world video recommendation log data.
We construct an A/B testing environment and evaluate the selection error rate of MID against that of existing methods.
The experimental results show that MID consistently achieves lower selection error rates across a wide range of settings, and in particular, it achieves the same selection error rate as existing methods with only half the number of A/B testing samples, demonstrating a substantial improvement in data efficiency.

Our main contributions are summarized as follows.
\begin{itemize}
    \item We clarify the statistical mechanism demonstrating that the AVG estimator is not necessarily optimal for algorithm selection in A/B testing, and we show that strategically inducing positive correlation between estimators can substantially reduce incorrect algorithm selections.
    \item We propose a novel estimator, MID, that achieves a lower selection error rate than AVG typically used in A/B testing and standard offline evaluation methods by employing stepwise estimation through a hypothetical middle algorithm.
    \item We empirically demonstrate the effectiveness of MID using public real-world data, showing that it consistently yields lower selection error rates than existing approaches.
\end{itemize}

\section{Related Work}
In this section, we summarize the literature relevant to our study, focusing on A/B testing and offline evaluation.

\subsection{A/B Testing}
A/B testing, also known as Online Controlled Experiments (OCEs), is an essential tool for data-driven decision-making in various online platforms, such as recommender systems, search engines, and e-commerce websites\cite{Gomez-Uribe2016-dw, Kohavi2013-rx, Zhang2025-ep}.
It is widely recognized as the gold standard for evaluating the performance of algorithms\cite{Gomez-Uribe2016-dw, Kohavi2012-wk, Kohavi2013-rx, Zhang2025-ep, Kohavi2023-el}.
By randomly assigning users to different groups, A/B testing eliminates confounding factors and ensures that the difference in outcomes is attributed to the change in algorithms.
This randomization procedure guarantees the unbiasedness of the algorithm’s performance, providing accurate evaluation for the compared algorithms.

However, target metrics (e.g., click-through rate and revenue) in A/B testing are often highly variable\cite{Deng2013-gn, Guo2021-sr,Jeunen2024-mb}.
Since this variability requires large sample sizes to achieve sufficient statistical power, A/B testing is often time-consuming and costly.
To address this sample inefficiency, previous research has focused on variance reduction techniques.
Controlled-experiment Using Pre-Experiment Data (CUPED) is a basic method in this domain\cite{Deng2013-gn}.
This method improves the sensitivity of A/B testing by selecting pre-experiment covariates positively correlated with the outcome and estimating the difference between the outcome and the weighted covariates.
\citet{Guo2021-sr} expanded this concept by using machine learning to capture more complex correlations.
\citet{Lin2024-nl} proposed a variance reduction method that combines pre-experiment and in-experiment covariates.
While methods like CUPED reduce the variance by leveraging the positive correlation between outcomes and covariates, our proposed method uses the positive correlation between the performance estimators of the compared algorithms, which arises from offline evaluation.
In this study, we address this gap by proposing a novel estimator, which uses the correlation to improve algorithm selection accuracy.

\citet{Wan2022-kh-SafeExploration} proposed an algorithm evaluation and comparison method using offline evaluation techniques.
This method compares an existing algorithm with a designed efficient and safe algorithm instead of directly comparing the existing algorithm with a new one.
The alternative algorithm guarantees sufficient estimation accuracy without significantly degrading performance compared to the existing algorithm.
However, this method does not assume an A/B testing environment, which differs from the experimental setting of our study.
Similarly, \citet{Sakhi2025-practical-improvements} used offline evaluation techniques and improve A/B testing by focusing on algorithm similarity.
However, their problem setting differs from ours as they formulate A/B testing using the framework of Markov Decision Processes.
Furthermore, their primary objective is to improve the estimation accuracy of performance differences, whereas we aim to improve algorithm selection accuracy.

\subsection{Offline Evaluation}
While online experiments directly measure the performance of algorithms by deploying algorithms to real environments, they are often not a realistic approach due to time and implementation constraints.
Therefore, offline evaluation has gained attention for its ability to evaluate new algorithms using only historical log data.
The Inverse Propensity Score (IPS) estimator enables unbiased estimation by re-weighting observed rewards based on the action selection probabilities of the logging and target algorithms, though it often suffers from high variance.
The Direct Method (DM) estimator is another evaluation method using a reward model, which approximates the unknown expected reward function using historical log data.
The variance of DM is often lower than that of IPS, but it introduces bias from reward model misspecification.
The Doubly Robust (DR) estimator combines these methods to achieve unbiasedness and low variance.
In addition to these standard estimators, subsequent research has proposed various methods.

Leveraging these offline evaluation techniques, prior work has focused on the problem of algorithm comparison and selection\cite{Jeunen2024-vb, Sakhi2024-mt, Yang2022-rm, Konyushkova2021-if, Zhang2021-mr}.
For instance, \citet{Jeunen2024-vb} observed that in real-world applications, improvements in algorithms are typically incremental.
This leads to high similarity between logging and target algorithms, inducing a positive correlation between their estimators.
Based on this insight, they proposed a method to reduce estimation variance by estimating the performance differences.
However, the scope of their work is limited to offline algorithm selection, which is distinct from online experiments.
\section{Preliminaries}
This section formulates the problem of data-driven algorithm selection and discusses existing methods.

\subsection{Formulation of Algorithm Selection}
Data-driven decision-making problems in online services and business applications are commonly formulated within the contextual bandit framework~\cite{Li2011-jb, Durand2018-qg, Gruson2019-es}. The contextual bandit framework models a sequential decision-making process: determining the action that maximizes the reward based on a given situation (context).
This process assumes a flow where, first, a context $x \in \mathcal{X}$ (e.g., user demographics, behavioral history) is drawn from an unknown distribution $p(x)$.
In response, an algorithm $\pi(a \mid x)$ stochastically selects an action $a \in \mathcal{A}$ (e.g., an item such as a video or song).
Consequently, a reward $r$ (e.g., click-through, viewing time) is observed from an unknown distribution $p(r \mid x, a)$.
Here, we define the expected reward function for a specific action $a$ in a given situation $x$ as $q(x, a) := \mathbb{E}[r \mid x, a]$.
The performance of an algorithm $\pi$, denoted as $V(\pi)$, is often expressed as follows~\cite{Su2020-av, Sachdeva2020-ha}.
\begin{align}
    V(\pi) &:= \mathbb{E}_{p(x)\pi(a \mid x)p(r \mid x, a)}[r] = \mathbb{E}_{p(x)\pi(a \mid x)}[q(x, a)] \notag
\end{align}
The algorithm performance $V(\pi)$ represents the expected reward obtained from deploying it in a real-world environment.
In a video recommendation task, for example, the expected number of user clicks or the viewing time obtained from deploying the algorithm constitutes the algorithm's performance. 

In this paper, we consider an A/B testing scenario comparing two different algorithms $\pi_A$ and $\pi_B$.
Let the A/B testing data collected from each algorithm be $\mathcal{D}_A = \{(x_i^A, a_i^A, r_i^A)\}_{i=1}^{n^A}$ and $\mathcal{D}_B = \{(x_i^B, a_i^B, r_i^B)\}_{i=1}^{n^B}$.
The problem we tackle is constructing a performance difference estimator $\Delta\hat{V}$ using the A/B testing data such that its sign matches that of the true performance difference $V(\pi_A) - V(\pi_B)$.
To measure how inaccurately the difference estimator selects the algorithm, we define the error rate in algorithm selection as the primary metric:
\begin{equation}
    error\text{-}rate(\Delta\hat{V}) = \mathbb{E}_{p(\mathcal{D}_A, \mathcal{D}_B)}\bigg[\mathbb{I}\big\{\text{sign}\big(\Delta V) \neq \text{sign}\big(\Delta\hat{V}\big)\big\}\bigg]
\end{equation}
Here, $sign(\cdot)$ denotes the sign function, which returns 1 if the argument is positive, -1 if it is negative, and 0 if it is zero.
$\mathbb{I}\{\cdot\}$ is the indicator function, which returns 1 when the condition is satisfied and 0 otherwise.
This rate is defined as the probability that the estimator incorrectly selects the truly worse algorithm based on a finite sample.
For instance, an error rate of 30\% implies that there is a 30\% chance that the estimator will wrongly select the worse algorithm given the logged data.

The algorithm selection error rate of a difference estimator $\Delta\hat{V}$ depends mainly on the following two factors.\\ \\
\noindent \textbf{Bias}: This measures how far the expected value of the difference estimator $\Delta\hat{V}$ is from the true difference $\Delta V$.
\begin{equation}
    Bias\big[\Delta\hat{V}\big] = \mathbb{E}_{p(\mathcal{D}_A, \mathcal{D}_B)}\big[\Delta V - \Delta\hat{V} \big]
\end{equation}

\noindent \textbf{Variance}: This measures the variation of the estimates derived from the difference estimator $\Delta\hat{V}$.
\begin{equation}
    Var\big[\Delta\hat{V}\big] = \mathbb{E}_{p(\mathcal{D}_A, \mathcal{D}_B)}\bigg[\bigg(\Delta\hat{V} - \mathbb{E}_{p(\mathcal{D}_A, \mathcal{D}_B)}\big[\Delta\hat{V}\big]\bigg)^2 \bigg]
\end{equation}
Moreover, taking the AVG-based difference estimator $\Delta\hat{V}_{AVG} = \hat{V}_{AVG}(\pi_A) - \hat{V}_{AVG}(\pi_B)$ as an example, its variance can be decomposed as follows.
\begin{align}
    &Var\big[\hat{V}_{AVG}(\pi_A) - \hat{V}_{AVG}(\pi_B)\big] \label{eq:difference-var} \\
    &= Var\big[\hat{V}_{AVG}(\pi_A)\big] + Var\big[\hat{V}_{AVG}(\pi_B)\big] -2Cov\big[\hat{V}_{AVG}(\pi_A), \hat{V}_{AVG}(\pi_B)\big] 
     \notag
\end{align}
This equation shows that when a correlation (covariance) exists between the two performance estimators, $\hat{V}(\pi_A)$ and $\hat{V}(\pi_B)$, it affects the total variance. In particular, if a positive correlation exists between the estimators, the total variance will be less than the sum of the individual variances.

\subsection{Existing Methods}
\subsubsection{\textbf{The AVG estimator in A/B testing}}
The most widely used approach for algorithm selection in A/B testing is the AVG estimator \cite{Kohavi2012-wk, Kohavi2013-rx, Wang2019-do}. This method independently estimates the performance of each algorithm by computing the sample mean of the observed rewards $\{r_i\}_{i=1}^{n}$ for each group and then compares these estimates to select the better algorithm. Specifically, AVG selects an algorithm based on the sign of the following difference estimator.
\begin{align}
    \Delta\hat{V}_{AVG} = \frac{1}{n^A}\sum_{i=1}^{n^A}r_i^A - \frac{1}{n^B}\sum_{i=1}^{n^B}r_i^B
    \label{eq:avg-difference}
\end{align}
Since this difference estimator is an unbiased estimator of the true performance difference (i.e., $\mathbb{E}_{p(\mathcal{D}_A, \mathcal{D}_B)}[\Delta\hat{V}_{AVG}] = \Delta V$), the algorithm selection error rate is governed entirely by its variance.
In A/B testing, since data collected for algorithms A and B are independent by design, the two performance estimators $\hat{V}_{AVG}(\pi_A; \mathcal{D}_A)$ and $\hat{V}_{AVG}(\pi_B; \mathcal{D}_B)$ are uncorrelated (i.e., their covariance is zero). This property is already well-illustrated in Figure~\ref{fig:correlation}, where the scatter plot corresponding to AVG exhibits a clear circular shape. Due to the absence of correlation, the covariance term in Eq.~\ref{eq:difference-var} vanishes, and the variance of the overall AVG difference estimator reduces to the sum of the individual variances
\begin{align}
    Var\big[\Delta\hat{V}_{AVG}\big] = Var\big[\hat{V}_{AVG}(\pi_A)\big] + Var\big[\hat{V}_{AVG}(\pi_B)\big] \notag
\end{align}

To establish a theoretical baseline for comparison with our proposed method, we analyze this variance term in greater detail.
Assuming for simplicity that the sample sizes are equal (i.e., $n=n^A=n^B$), we can expand the variance of AVG as
\begin{align}
    &Var\big[\Delta\hat{V}_{AVG}\big]\notag\\
    &= \frac{1}{n}\bigg\{\mathbb{E}_{p(x)}\bigg[\sum_{a \in \mathcal{A}}\big(\pi_A(a \mid x) + \pi_B(a \mid x)\big)\big(q^2(x, a) + \sigma^2(x, a)\big)\bigg]\notag\\
    &\hspace{5cm}- V^2(\pi_A) - V^2(\pi_B)\bigg\}
    \label{eq:AVG-variance}
\end{align}
This expression consists exclusively of independent variance components, explicitly highlighting the absence of any covariance term that could otherwise reduce the probability of critical selection errors. In other words, AVG offers no mechanism to reduce variance by exploiting correlation between the performance estimates of algorithms A and B, which constitutes a fundamental limitation for algorithm selection. This limitation directly explains why AVG underperformed an estimator based on offline data in our pre-experiment, despite having access to ideal A/B testing data.

\subsubsection{\textbf{Standard Offline Evaluation Method}}
Although the primary objective of this work is to develop a novel estimator for A/B testing that outperforms AVG in algorithm selection, the pre-experiment in Section~1 shows that a standard offline evaluation method, namely IPS, achieves a lower selection error rate than AVG even when A/B testing data are available.
Since our proposed approach explicitly leverages A/B testing data collected from A/B testing, it is therefore naturally required to outperform offline evaluation methods that rely exclusively on offline data.
Accordingly, we include offline evaluation methods as an important baseline and describe them in detail below.

The IPS estimator is one of the most widely used offline evaluation methods \cite{Horvitz1952-ws, Su2020-av}.
IPS estimates the performance of a new algorithm by re-weighting the rewards observed under a logging algorithm that differs from the new one.
Formally, given data $\mathcal{D}_A$ collected by the logging algorithm $\pi_A$, the IPS estimator for a new algorithm $\pi_B$ is defined as
\begin{align}
    \hat{V}_{IPS}(\pi_B; \mathcal{D}_A) \notag
    &= \frac{1}{n^A}\sum_{i=1}^{n^A}\frac{\pi_B(a_i^A \mid x_i^A)}{\pi_A(a_i^A \mid x_i^A)}r_i^A = \frac{1}{n^A}\sum_{i=1}^{n^A}w(x_i^A, a_i^A)r_i^A \notag
\end{align}
where $\pi_A$ and $\pi_B$ denote the logging and new algorithm, respectively, and $w(x,a)=\pi_B(a\mid x)/\pi_A(a\mid x)$ is the importance weight.

To perform algorithm selection, we construct an IPS-based difference estimator. Specifically, we select the algorithm based on the sign of
\begin{align}
    \Delta\hat{V}_{IPS}
    &= \hat{V}_{IPS}(\pi_A; \mathcal{D}_A) - \hat{V}_{IPS}(\pi_B; \mathcal{D}_A) \notag\\
    &= \frac{1}{n^A}\sum_{i=1}^{n^A}\frac{\pi_A(a_i^A \mid x_i^A) - \pi_B(a_i^A \mid x_i^A)}{\pi_A(a_i^A \mid x_i^A)}r_i^A
    \label{eq:ips-difference}
\end{align}
where algorithm $A$ is selected if $\Delta\hat{V}_{IPS} > 0$, and algorithm $B$ otherwise.

Under the common support assumption\footnote{Common support assumption \cite{Sachdeva2020-ha}: the logging policy $\pi_A$ has common support for the new policy $\pi_B$ if $\pi_B(a\mid x) > 0$ implies $\pi_A(a\mid x) > 0$ for all $a \in \mathcal{A}$ and $x \in \mathcal{X}$.}, the IPS-based difference estimator is unbiased.
\begin{align}
    \mathbb{E}_{p(\mathcal{D}_A)}[\Delta\hat{V}_{IPS}] = \Delta V \notag
\end{align}
Furthermore, the variance of IPS is given by
\begin{align}
    &Var\big[\hat{V}_{IPS}(\pi_A;\mathcal{D}_A) - \hat{V}_{IPS}(\pi_B;\mathcal{D}_A)\big] \notag \\
    &= Var\big[\hat{V}_{IPS}(\pi_A;\mathcal{D}_A)\big] + Var\big[\hat{V}_{IPS}(\pi_B;\mathcal{D}_A)\big] \notag \\
    & ~~~~~~~~~~~~~~~~~~~~~~~~~~-2Cov\big[\hat{V}_{IPS}(\pi_A;\mathcal{D}_A), \hat{V}_{IPS}(\pi_B;\mathcal{D}_A)\big] \notag
\end{align}
Because both estimators $\hat{V}_{IPS}(\pi_A;\mathcal{D}_A)$ and $\hat{V}_{IPS}(\pi_B;\mathcal{D}_A)$ are computed using the same offline data $\mathcal{D}_A$, they are generally correlated rather than independent, as empirically illustrated in Figure~\ref{fig:correlation}.
When this correlation is positive, the covariance term reduces the total variance below the sum of the individual variances \cite{Jeunen2024-vb}.
This variance reduction effect explains why the estimator based on offline data can achieve a lower selection error rate than AVG, even without A/B testing.

To make this mechanism explicit and to establish a theoretical baseline for comparison with our proposed method, we expand the variance of the IPS-based difference estimator. Assuming equal sample sizes $n=n^A=n^B$ for simplicity, we obtain
\begin{align}
    &Var\big[\Delta\hat{V}_{IPS}\big]\notag\\
    &= \frac{1}{n}\bigg\{\mathbb{E}_{p(x)}\bigg[\sum_{a \in \mathcal{A}}\frac{\big(\pi_A(a \mid x) - \pi_B(a \mid x)\big)^2}{\pi_A(a \mid x)}\big(q^2(x, a) + \sigma^2(x, a)\big)\bigg] \notag\\
    &\hspace{5cm}- \big(V(\pi_A) - V(\pi_B)\big)^2\bigg\}
    \label{eq:IPS-variance}
\end{align}
Unlike the AVG-based difference estimator, expanding the squared term in this expression produces a negative cross term, which directly accounts for the reduction in total variance.
This variance reduction mechanism is intrinsic to offline evaluation methods, including the Doubly Robust estimator \cite{Su2020-av}, and arises from estimating the performance of multiple algorithms using shared data.

Despite this advantage, their reliance on importance weights $w(x,a)=\pi_B(a\mid x)/\pi_A(a\mid x)$ \cite{Su2020-av, Swaminathan2015-dj, Metelli2021-ym} can lead to severe variance inflation when the similarity between the compared algorithms is low.
Such variance inflation can negate the benefits of positive covariance, making reliable algorithm selection challenging in practice.
\section{A Proposed Method}
This section introduces a novel estimator, MID, and analyzes its advantages through a bias--variance decomposition.

\subsection{Formulation of the MID estimator}
In A/B testing, the AVG estimators \(\hat{V}_{AVG}(\pi_A; \mathcal{D}_A)\) and \(\hat{V}_{AVG}(\pi_B; \mathcal{D}_B)\) are uncorrelated, as empirically demonstrated in the pre-experiment.
Consequently, the AVG-based difference estimator does not benefit from the variance reduction effect described in Eq.~\ref{eq:difference-var}, which shows that inducing positive correlation can reduce the total variance below the sum of individual variances.
In contrast, offline evaluation methods unintentionally induce such positive correlation as a byproduct of estimating the performance of multiple algorithms using shared data. Motivated by this insight, we propose the MID estimator, which intentionally induces positive correlation using initially independent A/B testing data in order to maximize the variance reduction effect and improve the algorithm selection error rate. To intentionally induce positive correlation in an A/B testing setting, we exploit the core mechanism underlying offline evaluation, namely the use of offline data to estimate the performance of multiple algorithms.
To transfer this structure to the A/B testing datasets \(\mathcal{D}_A\) and \(\mathcal{D}_B\), we introduce a middle algorithm \(\pi_M\) that serves as an intermediary between algorithms \(\pi_A\) and \(\pi_B\).
We then estimate the performance difference between \(\pi_A\) and \(\pi_B\) in a stepwise manner via \(\pi_M\).
This construction enables MID to effectively leverage the variance reduction effect induced by positive correlation, resulting in a lower error rate in A/B testing.

MID begins by decomposing the true performance difference into the following two components (note that we will later discuss how we should obtain the middle algorithm $\pi_M$).
\begin{equation}
    \Delta V = {\color{red}\underline{\textcolor{black}{V(\pi_A) - V(\pi_M)}}} + {\color{blue}\underline{\textcolor{black}{V(\pi_M) - V(\pi_B)}}} \notag
\end{equation}
We estimate the first term using an offline evaluation method with log data \(\mathcal{D}_A\) and the second term using log data \(\mathcal{D}_B\).
In what follows, we instantiate MID using IPS, although more advanced methods, such as Doubly Robust estimation, can be incorporated in a straightforward manner. Formally, we define the MID estimator as
\begin{align}
    \Delta \hat{V}_{MID} = &\hat{V}_{IPS}(\pi_A;\mathcal{D}_A) - \hat{V}_{IPS}(\pi_M;\mathcal{D}_A) \notag\\
    &+ \hat{V}_{IPS}(\pi_M;\mathcal{D}_B) - \hat{V}_{IPS}(\pi_B;\mathcal{D}_B) \notag
    \label{eq:mid-estimator}
\end{align}
Crucially, MID uses the same log data to estimate the difference term, $\hat{V}_{IPS}(\pi_A;\textcolor{red}{\mathcal{D}_A})- \hat{V}_{IPS}(\pi_M;\textcolor{red}{\mathcal{D}_A})$.
As a result, the pair of estimators induces a positive correlation (in the next section, we will empirically demonstrate that our estimator can indeed intentionally introduce such a positive correlation between estimates).
The same applies to the third and fourth terms, which use the common log data $\mathcal{D}_B$.
Consequently, a positive correlation occurs even when estimating performance differences with A/B testing data ($\mathcal{D}_A$ and $\mathcal{D}_B$), allowing us to expect a variance reduction effect.

A key strength of MID lies in the ability to explicitly design the middle algorithm \(\pi_M\) according to our objectives.
As discussed in the previous section, offline evaluation methods, which constitute a core component of MID, can suffer from severe variance inflation when the similarity between the compared algorithms is low.
By designing \(\pi_M\) to remain highly similar to both \(\pi_A\) and \(\pi_B\), MID ensures that each stepwise comparison includes only similar algorithms, thereby mitigating variance explosion. Guided by this principle, we derive the optimal middle algorithm by minimizing the total variance of MID, with the goal of further reducing algorithm selection errors.

As a first step, we express the variance of MID by the equation below (assuming $n=n^A=n^B$ here for brevity).
\begin{align}
    &Var[\Delta\hat{V}_{MID}] \notag\\
    &= \frac{1}{n}\bigg\{\mathbb{E}_{p(x)}\bigg[\sum_{a \in \mathcal{A}}\bigg(\frac{\Delta\pi_A^2(a \mid x)}{\pi_A(a \mid x)}+\frac{\Delta\pi_B^2(a \mid x)}{\pi_B(a \mid x)}\bigg)\big(q^2(x, a) + \sigma^2(x, a)\big)\bigg]\notag\\
    &\;\;\;\;\;\;\;\;\;\;- \big(V(\pi_A) - V(\pi_M)\big)^2 - \big(V(\pi_B) - V(\pi_M)\big)^2\bigg\},\\
    &\;\;\;\;\;\;\;\;\;\;\Delta\pi_A^2(a \mid x) = \big(\pi_A(a \mid x) - \pi_M(a \mid x)\big)^2,\notag\\
    &\;\;\;\;\;\;\;\;\;\;\Delta\pi_B^2(a \mid x) = \big(\pi_B(a \mid x) - \pi_M(a \mid x)\big)^2\notag
    \label{eq:MID-variance}
\end{align}
We decompose this variance into a reward term $\mathbb{E}_{p(x)} [\sum_{a \in \mathcal{A}}\bigg(\frac{\Delta\pi_A^2(a \mid x)}{\pi_A(a \mid x)} + \frac{\Delta\pi_B^2(a \mid x)}{\pi_B(a \mid x)}\bigg)\big(q^2(x, a) + \sigma^2(x, a)\big)]$ and a performance term $\big(V(\pi_A) - V(\pi_M)\big)^2 + \big(V(\pi_M) - V(\pi_B)\big)^2$.
Because the performance term depends on unknown true policy values, we focus on minimizing the reward term.
By differentiating the reward term with respect to \(\pi_M(a \mid x)\) and setting the derivative to zero, we obtain the optimal middle algorithm.
The following theorem summarizes the result.
\begin{thm}
    The optimal middle algorithm $\pi_{M^*}$ that minimizes the reward term of the variance of MID is given by
    \begin{equation}
        \pi_{M^*}(a \mid x) = \frac{2 \pi_A(a \mid x)\pi_B(a \mid x)}{\pi_A(a \mid x) + \pi_B(a \mid x)} \notag
    \end{equation}
    \label{thm:mid-policy}
\end{thm}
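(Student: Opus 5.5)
The plan is to minimize the reward term pointwise in $(x,a)$. Write $c(x,a) := q^2(x,a)+\sigma^2(x,a) \ge 0$. The reward term is
\begin{equation*}
R(\pi_M) = \mathbb{E}_{p(x)}\bigg[\sum_{a\in\mathcal{A}} f_{x,a}\big(\pi_M(a\mid x)\big)\bigg],
\end{equation*}
where for each fixed pair $(x,a)$
\begin{equation*}
f_{x,a}(m) = \bigg(\frac{(\pi_A(a\mid x)-m)^2}{\pi_A(a\mid x)} + \frac{(\pi_B(a\mid x)-m)^2}{\pi_B(a\mid x)}\bigg)c(x,a).
\end{equation*}
The integrand is a sum of terms, and each term depends on $\pi_M$ only through the single number $m=\pi_M(a\mid x)$. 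Minimizing each $f_{x,a}$ separately therefore minimizes $R$, because $p(x)\ge 0$ and the sum has nonnegative weights.

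\textbf{Step 1: pointwise minimization.} Take $(x,a)$ with $\pi_A,\pi_B>0$ and $c(x,a)>0$. Then $f_{x,a}$ is a quadratic in $m$ with second derivative $2c(x,a)\big(1/\pi_A+1/\pi_B\big)>0$, so it is strictly convex and its stationary point is the unique global minimizer. Setting the derivative to zero gives
\begin{equation*}
-\frac{2(\pi_A-m)}{\pi_A}-\frac{2(\pi_B-m)}{\pi_B}=0,
\end{equation*}
which is equivalent to $m\big(1/\pi_A+1/\pi_B\big)=2$. Hence $m = 2\pi_A\pi_B/(\pi_A+\pi_B)$, the harmonic mean, which is the claimed $\pi_{M^*}$.

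\textbf{Step 2: degenerate cases.} If $c(x,a)=0$, every value of $m$ is optimal, so we may take the stated formula without loss. If one of $\pi_A(a\mid x)$ or $\pi_B(a\mid x)$ is zero, the formula gives $\pi_{M^*}(a\mid x)=0$. This is the choice that keeps the corresponding importance weight finite, i.e.\ the common support requirement forces $\pi_M(a\mid x)=0$ there. It is also consistent with the limit of the interior formula.

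\textbf{Main obstacle: the normalization constraint $\sum_a \pi_M(a\mid x)=1$.} The harmonic mean is bounded by the arithmetic mean, so $\sum_a \pi_{M^*}(a\mid x)\le 1$, and the inequality is generally strict. Consequently $\pi_{M^*}$ is the minimizer of the unconstrained problem stated in the theorem, not of the problem restricted to probability distributions.

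I would argue that the unconstrained problem is the right one. In the MID estimator, $\pi_M$ enters only through $\hat{V}_{IPS}(\pi_M;\mathcal{D}_A)$ and $\hat{V}_{IPS}(\pi_M;\mathcal{D}_B)$. These are unbiased for the same linear functional $\mathbb{E}_{p(x)}\big[\sum_a \pi_M(a\mid x)q(x,a)\big]$, and that functional is well defined for any nonnegative $\pi_M$ dominated by both $\pi_A$ and $\pi_B$. The two $\pi_M$ terms therefore cancel in expectation, so MID stays unbiased and the variance formula above remains valid.

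If a normalized middle policy were insisted upon, the pointwise analysis would instead need a Lagrange multiplier $\lambda(x)$ per context. That would give $m=(2+\lambda(x))\pi_A\pi_B/(\pi_A+\pi_B)$ under a $c$-weighted version of the condition, which is no longer the clean formula. I would state explicitly that the theorem concerns the unconstrained minimization.
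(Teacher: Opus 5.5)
Your proposal is correct and follows the paper's route: the paper also minimizes the reward term pointwise by setting $-2(\pi_A-\pi_M)/\pi_A - 2(\pi_B-\pi_M)/\pi_B = 0$ and solving for $\pi_M(a\mid x)$. Your strict-convexity check, your handling of the degenerate cases, and your observation that $\sum_a \pi_{M^*}(a\mid x)\le 1$ are all sound, and the last point means the theorem is really an unconstrained minimization; this is harmless, since $\pi_M$ cancels in expectation and the final estimator depends only on the weights $(\pi_A-\pi_B)/(\pi_A+\pi_B)$, and the paper leaves all three points implicit.
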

Moreover, when MID uses the optimal middle algorithm \(\pi_{M^*}\), the estimator is unbiased.
The following theorem provides a sufficient condition.
\begin{thm}
    MID is unbiased provided that the middle algorithm $\pi_M$ satisfies the following condition
    \begin{equation}
        \pi_A(a \mid x) = 0 \oplus \pi_B(a \mid x) = 0 \Rightarrow\pi_M(a \mid x) = 0 \notag
    \end{equation}
    \label{thm:cond-unbiased}
    \vspace{-5mm}
\end{thm}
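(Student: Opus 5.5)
The plan is to compute the expectation of each of the two stepwise IPS difference terms in $\Delta\hat{V}_{MID}$ separately and check when each equals its target. For the first term, since the samples in $\mathcal{D}_A$ are i.i.d.\ from $p(x)\pi_A(a\mid x)p(r\mid x,a)$, we get
\begin{align}
\mathbb{E}_{p(\mathcal{D}_A)}\big[\hat{V}_{IPS}(\pi_A;\mathcal{D}_A) - \hat{V}_{IPS}(\pi_M;\mathcal{D}_A)\big]
= \mathbb{E}_{p(x)}\Big[\sum_{a:\pi_A(a\mid x)>0}\big(\pi_A(a\mid x)-\pi_M(a\mid x)\big)q(x,a)\Big]. \notag
\end{align}
This equals $V(\pi_A)-V(\pi_M)$ exactly when the omitted part $\mathbb{E}_{p(x)}[\sum_{a:\pi_A(a\mid x)=0}\pi_M(a\mid x)q(x,a)]$ vanishes. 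The analogous expression holds for the $\mathcal{D}_B$ term, with omitted part $\mathbb{E}_{p(x)}[\sum_{a:\pi_B(a\mid x)=0}\pi_M(a\mid x)q(x,a)]$, up to a sign.

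The resulting bias of MID is therefore
\begin{align}
\mathbb{E}_{p(x)}\Big[\sum_{a:\pi_A=0}\pi_M q\Big]-\mathbb{E}_{p(x)}\Big[\sum_{a:\pi_B=0}\pi_M q\Big], \notag
\end{align}
again up to sign convention. Next I partition the actions for each $x$ into four cases: both propensities positive, only $\pi_A=0$, only $\pi_B=0$, and both zero. Actions with both propensities positive contribute to neither sum. Actions with $\pi_A=\pi_B=0$ appear in both sums with opposite signs and cancel. Only the exclusive-or cases remain, and the hypothesis forces $\pi_M(a\mid x)=0$ there, so the bias is zero. This yields unbiasedness for any nonnegative $\pi_M$ satisfying the condition; no normalization of $\pi_M$ is even needed.

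The main obstacle is the bookkeeping that justifies the cancellation in the both-zero case. IPS implicitly drops actions that are unsupported by the logging policy. The estimator of $V(\pi_A)$ under $\mathcal{D}_A$ is exact because its weights equal $1$. The cross terms must be handled carefully so that the $\pi_A=\pi_B=0$ contributions genuinely appear with opposite signs and cancel, rather than being treated as separate biases.

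Finally, I note that $\pi_{M^*}$ from Theorem~\ref{thm:mid-policy} vanishes whenever either propensity is zero, since its numerator is $2\pi_A\pi_B$. It therefore satisfies the condition, which confirms the preceding remark that MID with $\pi_{M^*}$ is unbiased.
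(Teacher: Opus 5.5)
Your proposal is correct and follows essentially the same route as the paper. Both compute the expectation of each stepwise IPS difference, isolate the mass of $\pi_M$ on actions outside the respective logging support, split by the four support cases, and observe that the both-zero contributions cancel while the exclusive-or ones vanish under the hypothesis. Your remark that $\pi_M$ need not be normalized is a useful addition, since $\pi_{M^*}$ is a harmonic mean of $\pi_A$ and $\pi_B$ and generally sums to less than one.
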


Using the optimal middle algorithm $\pi_{M^*}$ derived in Theorem~\ref{thm:mid-policy}, we arrive at the following formulation of MID.
\begin{align}
    \Delta \hat{V}_{MID} = &\frac{1}{n^A}\sum_{i=1}^{n^A}\frac{\pi_A(a_i^A \mid x_i^A) - \pi_B(a_i^A \mid x_i^A)}{\pi_A(a_i^A \mid x_i^A) + \pi_B(a_i^A \mid x_i^A)}r_i^A \notag\\
    &+\frac{1}{n^B}\sum_{i=1}^{n^B}\frac{\pi_A(a_i^B \mid x_i^B) - \pi_B(a_i^B \mid x_i^B)}{\pi_A(a_i^B \mid x_i^B) + \pi_B(a_i^B \mid x_i^B)}r_i^B
    \label{eq:mid-difference}
\end{align}

This final form of the MID estimator enables more accurate algorithm selection than AVG by reducing variance through the intentional induction of positive correlation between estimators, as well as through the principled and theoretically grounded design of the middle algorithm $\pi_M$.\footnote{The proofs of Theorems~\ref{thm:mid-policy} and~\ref{thm:cond-unbiased} are described in Appendix~\ref{app:proof}.}

\subsection{Theoretical Comparison of Variance}
In this section, we theoretically compare the variance of MID with that of the AVG-based and IPS-based difference estimators.

\subsubsection*{\textbf{Comparison against the AVG-based Difference Estimator}}
Recall the variance of the $\Delta\hat{V}_{AVG}$ derived in Eq. \ref{eq:AVG-variance}.
Focusing on the reward term (the component depending on $q^2(x, a)$ and $\sigma^2(x, a)$), the variance of the $\Delta\hat{V}_{AVG}$ can be expressed as follows.
\begin{equation}
\mathrm{R}(x, a;\Delta\hat{V}_{AVG}) = \pi_A + \pi_B \notag
\end{equation}
where we omit the arguments $(a \mid x)$ and denote $\pi(a \mid x)$ simply as $\pi$ for notational brevity.

In contrast, we analyze the reward term of the $\Delta\hat{V}_{MID}$ using the optimal middle algorithm $\pi_{M^*}$.
The reward term is derived as
\begin{align*}
    R(x, a; \Delta\hat{V}_{MID})
    &= \frac{\big(\pi_A - \pi_{M^*}\big)^2}{\pi_A} + \frac{\big(\pi_B - \pi_{M^*}\big)^2}{\pi_B}\\
    &= \pi_A + \pi_B -\frac{4\pi_A\pi_B}{\pi_A + \pi_B}\\
    & \le \pi_A + \pi_B = R(x, a; \Delta\hat{V}_{AVG})
\end{align*}
Therefore, by substituting $\pi_{M^*}$ into the variance equation, we can mathematically demonstrate that the reward term of the $\Delta\hat{V}_{MID}$ is consistently smaller than or equal to that of the $\Delta\hat{V}_{AVG}$.
This inequality demonstrates that our method achieves variance reduction by inducing positive correlation in the presence of stochastic rewards, a property that is absent from conventional A/B testing approaches based on the AVG estimator.

Finally, we examine the performance term, which serves as a variance reduction component, and compare its magnitude across estimators.
For $\Delta\hat{V}_{\mathrm{AVG}}$ and $\Delta\hat{V}_{\mathrm{MID}}$, we define the corresponding performance terms as follows.
\begin{align}
    \mathrm{Per}(\Delta\hat{V}_{AVG}) &= V(\pi_A)^2 + V(\pi_B)^2 \notag\\
    \mathrm{Per}(\Delta\hat{V}_{MID}) &= \big(V(\pi_A) - V(\pi_M)\big)^2 + \big(V(\pi_B) - V(\pi_M)\big)^2 \notag
\end{align}
Our optimal middle algorithm $\pi_{M^*}$ does not explicitly minimize this component, as it depends on unknown true policy values.
From a theoretical standpoint, $\Delta\hat{V}_{\mathrm{AVG}}$ is superior in this specific respect, since it benefits from a larger subtraction term.
However, in practical scenarios, performance improvements are typically incremental, that is, the difference between $V(\pi_A)$ and $V(\pi_B)$ is small \cite{Kohavi2013-rx, Bojinov2022-ah}.
As a result, the magnitude of this performance term often becomes negligible, and its contribution to the overall variance is minimal.

\subsubsection*{\textbf{Comparison against the IPS-based Difference Estimator}}
Next, we consider the IPS-based difference estimator.
By expanding the reward and performance terms in Eq.~\ref{eq:IPS-variance} in the same manner as in the previous analysis, we obtain
\begin{align}
    R(x, a; \Delta\hat{V}_{IPS}) = \pi_A + \frac{\pi_B^2}{\pi_A} - 2\pi_B \notag\\
    \text{Per}(\Delta\hat{V}_{IPS}) = \big(V(\pi_A) - V(\pi_B)\big)^2 \notag
\end{align}
This derivation reveals that the term $\pi_B^2 / \pi_A$ leads to variance explosion when the logging algorithm $\pi_A$ assigns a low probability to actions that the target algorithm $\pi_B$ selects with high probability.
This mathematical structure explains the instability of offline evaluation when the similarity between algorithms is low.
In contrast, as derived above, the reward term of MID is bounded as $R(x, a; \Delta\hat{V}_{MID}) \le \pi_A + \pi_B$.
By mediating the estimation through the optimal middle algorithm $\pi_{M^*}$, our method effectively removes the risk of variance explosion and ensures stable comparison even when the algorithms are dissimilar.
\section{Empirical Evaluation}
This section validates the effectiveness of MID in various A/B testing environments constructed from a real-world recommendation dataset. Note that our code to replicate the experiment results is available on \url{https://kdd2026-mid.github.io/}.

\subsection{Setup}
\subsubsection*{\textbf{Dataset}}
In this experiment, we use the KuaiRec dataset,\footnote{https://kuairec.com/} a real-world video viewing dataset collected from the recommendation system on the video-sharing mobile app Kuaishou.
This dataset consists of 1,411 users, 3,327 items, and 4,676,570 interactions, providing the watch ratio (the percentage of total duration viewed) for the entire user-item matrix.
The availability of this fully-observed matrix allows us to calculate the exact ground-truth algorithm performance.
This feature is particularly crucial because calculations of the performance of algorithms based on standard partially-observed datasets often misidentify the true performance ranking of algorithms~\cite{Gao2022-pb}.
To our knowledge, KuaiRec is the only public real-world dataset that satisfies this requirement.
For this reason, we chose it to simulate an algorithm selection task.
We map the components of the KuaiRec dataset to the Contextual Bandit formulation where the context $x$ is user information, the action $a$ is the recommended video, and the watch ratio is used as the reward function $q(x, a)$.

\subsubsection*{\textbf{Building the Video Recommendation Algorithm}}
\begin{figure}
    \includegraphics[width=0.85\linewidth]{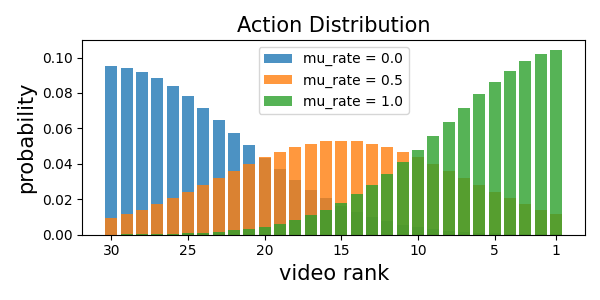}
    \vspace{-4mm}
    \caption{Discrete normal distributions with varying peaks}
    \vspace{-3mm}
    \label{fig:normal-policy}
\end{figure}
To build video recommendation algorithms for A/B testing simulations, we follow the steps below.
First, given a user context $x$, we retrieve the true expected reward $q(x, a)$ for all candidate videos $a$ and generate a ranking based on these scores.
Next, we assign action probabilities according to the rank, based on a predefined discrete normal distribution $\pi(a_{rank}; \mu_{rate})$
\begin{align*}
    \pi(a_{rank}; \mu_{rate}) = \frac{1}{\sqrt{2\pi}\sigma}\exp\bigg(-\frac{1}{2}\bigg(\frac{a_{rank}-|\mathcal{A}| \cdot\mu_{rate}}{\sigma}\bigg)^2\bigg)
\end{align*}
Figure \ref{fig:normal-policy} illustrates the discrete normal distributions for specific values of the parameter $\mu_{rate}$.
By adjusting the parameter $\mu_{rate}$, the constructed algorithm $\pi(a \mid x; \mu_{rate})$ focuses its recommendations on videos of specific tasks.
Specifically, setting $\mu_{rate}=1.0$ makes the algorithm likely to recommend the videos with the highest expected reward; $\mu_{rate}=0.5$ targets videos with the median expected reward; and $\mu_{rate}=0.0$ prioritizes the videos with the lowest expected reward. 
Consequently, by varying the parameters $\mu_A$ and $\mu_B$, we can simultaneously control the behavior similarity and the performance difference between algorithms $\pi_A$ and $\pi_B$, which enables us to adjust the difficulty of the algorithm selection tasks.

\subsubsection*{\textbf{Building the A/B testing Environment}}
To compare the error rate of our proposed method against existing methods, we build an A/B testing environment using recommendation algorithms $\pi_A$ and $\pi_B$ to generate log data.
The data generation process by algorithms $\pi_A$ and $\pi_B$ follows these specific steps:
\begin{enumerate}
    \item \textbf{User visit:} When a user visits the service, the recommendation system retrieves the user information, such as demographics and past viewing history, as the context $x$.
    \item \textbf{Random assignment (A/B testing):} The recommendation system assigns the user to either Group A or Group B.
    \item \textbf{Video recommendation:} If the user is assigned to Group A, the algorithm $\pi_A$ selects and presents a video predicted to suit the user's interests based on the user information.
    The same applies to Group B.
    The recommended video corresponds to the action $a$.
    \item \textbf{Recording results:} We observe and record the watch ratio as the reward $r$, which is $q(x, a)$ with white noise.
\end{enumerate}
By repeating this procedure for a given sample size, we generate log data $\mathcal{D}_A$ and $\mathcal{D}_B$ corresponding to the two algorithms and execute algorithm selection with the proposed and baseline estimators.
Finally, the metrics reported in this paper are calculated based on the estimator selection results of 10,000 independent trials.

\subsubsection*{\textbf{Compared Estimators for Algorithm Selection}}
In this experiment, we compare the error rate of three estimators in algorithm selection.
They include the AVG-based difference estimator (Eq. \ref{eq:avg-difference}) and the MID estimator (Eq. \ref{eq:mid-difference}), both using A/B testing data $\mathcal{D}_A$ and $\mathcal{D}_B$, and the IPS-based difference estimator (Eq. \ref{eq:ips-difference}), which uses only offline data (specifically $\mathcal{D}_A$ in this experiment).

\subsubsection*{\textbf{Evaluation Metrics}}
In this experiment, we evaluate each estimator using the following three metrics.
\begin{itemize}
    \item \textbf{Error rate}, which measures the ability to select the better algorithm and is the primary objective of this study.
    \item \textbf{Variance}, which quantifies the dispersion or instability in estimating the performance difference.
    Since all estimators in this paper are theoretically unbiased, variance is a key component of algorithm selection error rate.
    \item \textbf{Statistical power}, which is essential in A/B testing to avoid missing valid improvements and to avoid opportunity loss.
    We verify this by applying Welch's t-test to the estimates from the MID, AVG, and IPS estimators, comparing their rejection rates (power) for different significance levels.
\end{itemize}

\subsection{Results}
In this section, we present the experimental results comparing our MID estimator against the existing estimators.

\begin{figure*}
    \centering
    \includegraphics[width=0.95\linewidth]{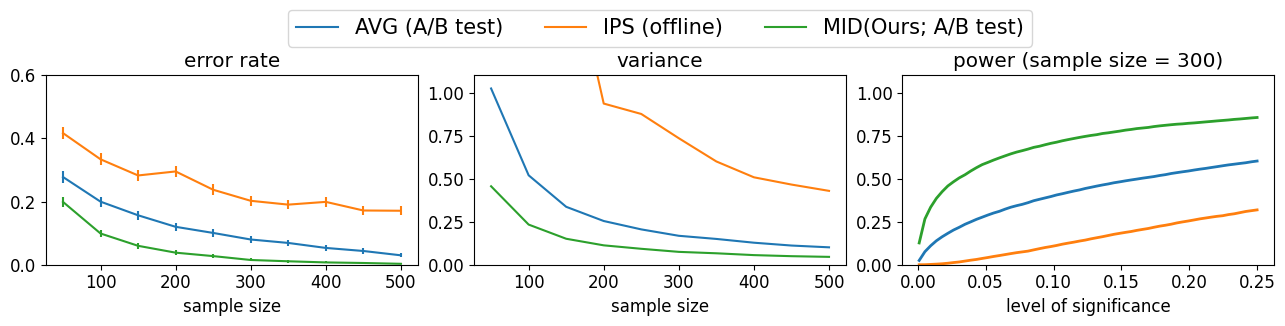}
    \vspace{-3mm}
    \caption{Comparison of the estimator's error rate, variance, and statistical power with varying sample sizes}
    \vspace{-3mm}
    \label{fig:result-sample-size}
\end{figure*}
\begin{figure*}
    \centering
    \includegraphics[width=0.95\linewidth]{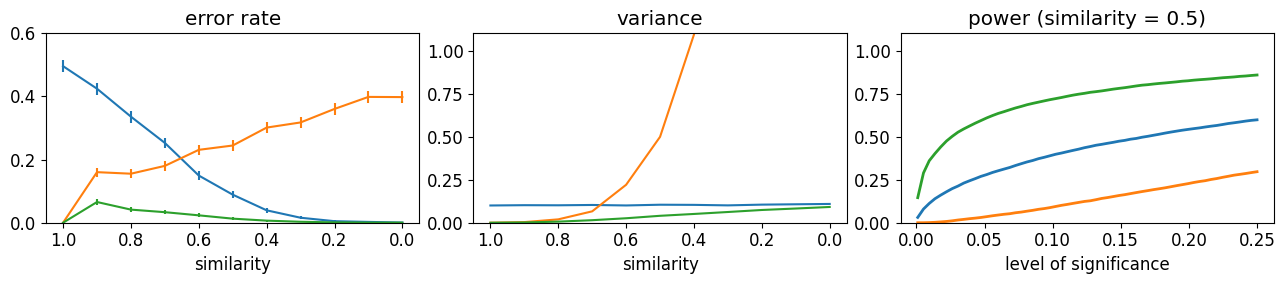}
    \vspace{-3mm}
    \caption{Comparison of the estimator's error rate, variance, and statistical power with varying levels of algorithm similarity}
    \vspace{-3mm}
    \label{fig:result-similar}
\end{figure*}

First, Figure \ref{fig:result-sample-size} illustrates the error rates of the estimators as we vary the sample size from 50 to 500 when the algorithm similarity is 0.5.
In the context of A/B testing, the sample size corresponds to the experiment duration or the number of participating users.
Consequently, a method capable of achieving a lower error rate with a small sample size enables rapid and cost-effective algorithm selection.
This experiment investigates the extent to which our MID estimator improves sample efficiency compared to conventional methods.
The results show that the sample efficiency of MID is better than that of existing methods.
Specifically, \textbf{while AVG using A/B testing data requires a sample size of 350-400 to achieve a 5\% error rate, MID achieves a 5\% error rate with a sample size of 100--200. This indicates that the MID requires only one-half to one-quarter of the sample size to achieve the same error rate as conventional A/B testing.}
Our results further demonstrate that MID consistently exhibits substantially higher statistical power than AVG using A/B testing data and IPS using offline data.
This efficiency gain is attributed to the positive correlation induced by MID, which effectively reduces the variance when estimating the performance difference of the algorithms.

Figure \ref{fig:result-similar} illustrates the impact of the similarity between the algorithms $\pi_A$ and $\pi_B$ on the error rate of each estimator.
In this experiment, we verify that MID overcomes the variance inflation caused by low algorithm similarity.
Typically, IPS becomes unstable when the similarity between algorithms is low because the importance weight $w(x, a)$ explodes.
In contrast, AVG remains stable regardless of algorithm similarity, but it fails to benefit from variance reduction via the correlation mechanism.
This experiment demonstrates whether the proposed method can overcome this trade-off and consistently achieve a lower error rate than existing methods, regardless of algorithm similarity.
Specifically, we fix the peak-adjustment parameter of algorithm $\pi_A$ at $\mu_{rate_A}=0.0$ and sweep the parameter $\mu_{rate_B}$ of algorithm $\pi_B$ from 0.0 to 1.0 on the x-axis.
This setup corresponds to algorithm improvement testing in real-world recommendation system development: minor modifications result in high algorithm similarity, while significant changes lead to low similarity. 
As shown in the results, the proposed method achieves a lower error rate than the AVG and IPS under all experimental conditions.
It outperforms IPS even in high-similarity scenarios where offline methods are generally effective, while it also outperforms AVG in low-similarity scenarios.
This result is attributed to the optimal design of the middle algorithm $\pi_{M^*}$ used in the proposed method.
In scenarios with low algorithm similarity, where IPS fails to avoid variance explosion, our method estimates the performance difference between more similar pairs (e.g., $\pi_A$ and $\pi_{M^*}$).
Thus, the optimal middle algorithm $\pi_{M^*}$ enables stable estimation by our method even when algorithm similarity is low.

Finally, we investigate the inner workings of MID and how it reduces the selection error rate. 
We begin by demonstrating whether the positive correlation, the key factor for reducing the critical error, is generated as intended.
MID is designed to induce a positive correlation between the following two pairs of estimators
\begin{enumerate}
    \item $\hat{V}_{IPS}(\pi_A;\mathcal{D}_A)$ and $\hat{V}_{IPS}(\pi_M;\mathcal{D}_A)$
    \item $\hat{V}_{IPS}(\pi_B;\mathcal{D}_B)$ and $\hat{V}_{IPS}(\pi_M;\mathcal{D}_B)$
\end{enumerate}
Figure \ref{fig:proposed-correlation} plots the estimates for these pairs, with the left and right sides corresponding to dataset $\mathcal{D}_A$ and $\mathcal{D}_B$, respectively. 
The results indicate that both pairs of estimators show a clear positive correlation and that the variance reducing mechanism of our proposed method works as expected.
\begin{figure}
    \centering
    \includegraphics[width=0.95\linewidth]{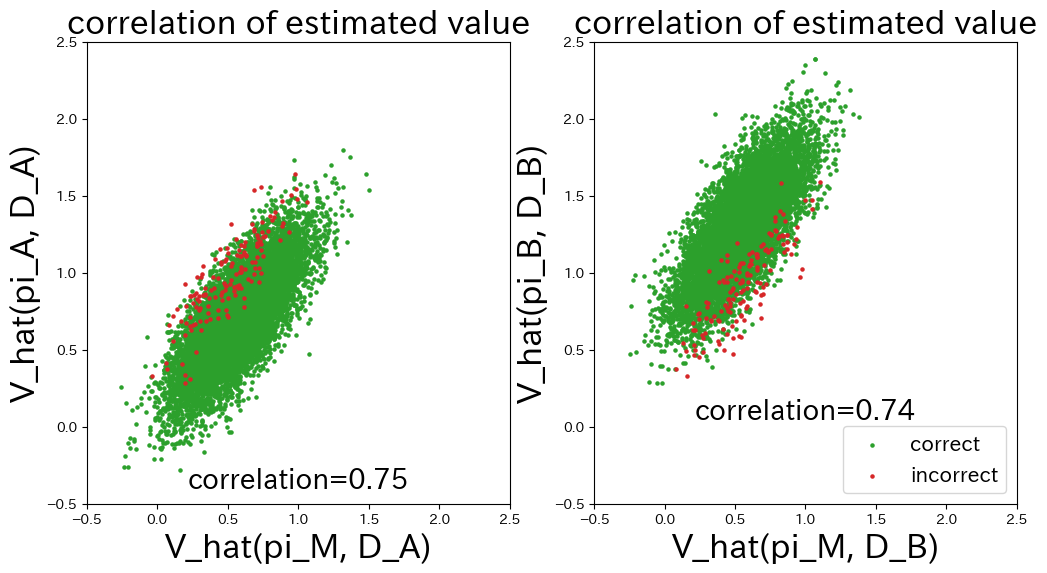}
    \vspace{-3mm}
    \caption{Scatter plots of MID estimates for $\pi_A$ and $\pi_B$ when the sample size is 300.}
    \vspace{-1mm}
    \label{fig:proposed-correlation}
\end{figure}

\begin{figure}
    \centering
    \includegraphics[width=0.95\linewidth]{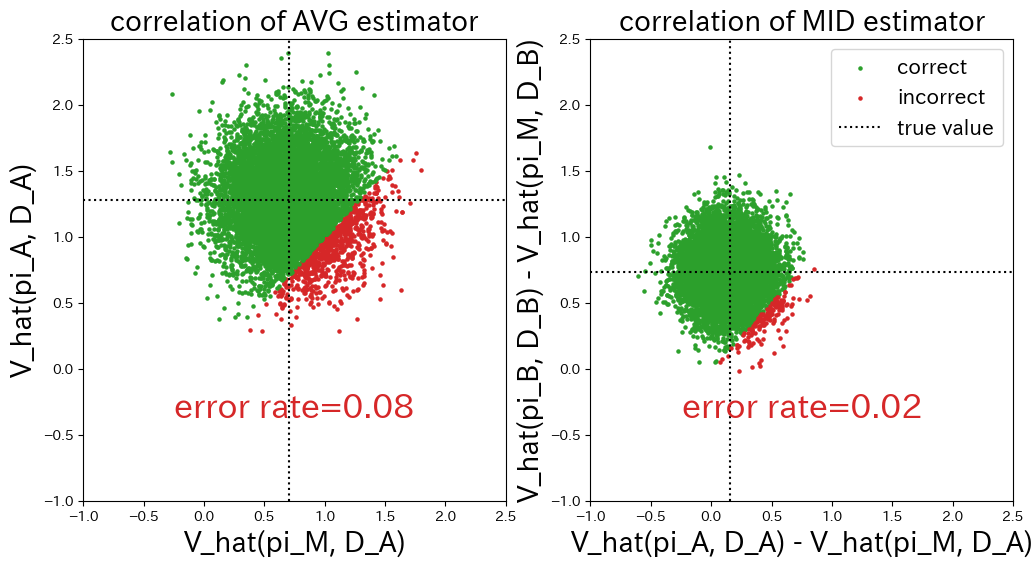}
    \vspace{-3mm}
    \caption{Comparison of scatter plots: AVG vs MID.}
    \vspace{-3mm}
    \label{fig:correlation-avg-and-mid}
\end{figure}

Next, we verify whether this mechanism reduces critical errors and thereby contributes to improving the selection error rate.
Figure \ref{fig:correlation-avg-and-mid} plots the final comparative estimates for AVG and MID.
The left panel shows the behavior of AVG, where the axes correspond to the performance estimates of $\pi_A$ and $\pi_B$.
The scatter plot appears circular due to the independence of the datasets $\mathcal{D}_A$ and $\mathcal{D}_B$ used for its estimation.
In contrast, the right panel shows MID, plotting $\hat{V}_{IPS}(\pi_A;\mathcal{D}_A)-\hat{V}_{IPS}(\pi_M;\mathcal{D}_A)$ and $\hat{V}_{IPS}(\pi_B;\mathcal{D}_B)-\hat{V}_{IPS}(\pi_M;\mathcal{D}_B)$.
This plot is also circular, as the estimators on the respective axes use independent datasets.
Comparing the two, we observe a shift in the center points of the scatter plots, but crucially, the spread of the scatter plot is much tighter for MID than for AVG.
This tighter spread is a direct result of the variance reduction effect caused by the positive correlation that we introduced in the A/B testing analysis.
These observations well demonstrate that the mechanism of intentionally inducing a positive correlation in our method to reduce variance and algorithm selection error rate works as designed.
\section{Conclusion and Future Work}
This paper first showed that standard A/B testing based solely on sample average estimation is often inferior to offline evaluation in terms of algorithm selection error rate.
This limitation arises because the AVG estimator used in A/B testing does not exploit the beneficial positive correlation that naturally emerges when shared data are used in offline evaluation.
Motivated by this observation, we proposed a novel estimator for algorithm selection in A/B testing, referred to as MID.
MID induces positive correlation by constructing virtually shared data from the original A/B testing data through a hypothetical middle algorithm.
Extensive experiments on a public real-world dataset demonstrate that MID achieves a selection error rate comparable to conventional approaches while using only half to one quarter of the sample size.
Moreover, MID avoids the variance explosion caused by importance weighting even when comparing algorithms with low similarity, leading to more reliable algorithm selection than offline evaluation baselines.

Although this study instantiates MID using importance weighting, the estimator is flexible and can naturally incorporate more advanced techniques such as the Doubly Robust estimator.
By integrating such methods, we expect to further reduce algorithm selection error rates in A/B testing.

\bibliographystyle{ACM-Reference-Format}
\balance
\bibliography{6_citation}

\clearpage
\onecolumn
\appendix

\section{Proofs} \label{app:proof}
\subsection{Derivation of the Variance of MID}
Recall that MID estimates the performance difference between algorithms A and B in a stepwise manner via a middle algorithm M.
\begin{align}
    \Delta \hat{V}_{MID} = &\hat{V}_{IPS}(\pi_A;\mathcal{D}_A) - \hat{V}_{IPS}(\pi_M;\mathcal{D}_A) + \hat{V}_{IPS}(\pi_M;\mathcal{D}_B) - \hat{V}_{IPS}(\pi_B;\mathcal{D}_B) \notag
\end{align}
Then, the variance of MID is given by
\begin{align}
    Var[\Delta\hat{V}_{MID}]
    = &Var[\hat{V}_{IPS}(\pi_A;\mathcal{D}_A) - \hat{V}_{IPS}(\pi_M;\mathcal{D}_A) + \hat{V}_{IPS}(\pi_M;\mathcal{D}_B) - \hat{V}_{IPS}(\pi_B;\mathcal{D}_B)] \notag \\
    = &Var[\hat{V}_{IPS}(\pi_A;\mathcal{D}_A) - \hat{V}_{IPS}(\pi_M;\mathcal{D}_A)] + Var[\hat{V}_{IPS}(\pi_M;\mathcal{D}_B) - \hat{V}_{IPS}(\pi_B;\mathcal{D}_B)] \notag \\
    & -2 Cov[\hat{V}_{IPS}(\pi_A; \textcolor{red}{\mathcal{D}_A}) - \hat{V}_{IPS}(\pi_M; \textcolor{red}{\mathcal{D}_A}), \hat{V}_{IPS}(\pi_M;\textcolor{blue}{\mathcal{D}_B}) - \hat{V}_{IPS}(\pi_B;\textcolor{blue}{\mathcal{D}_B)}] \notag
\end{align}
Since the log data $\mathcal{D}_A$ and $\mathcal{D}_B$ are independent, the covariance term becomes zero.
Therefore, the variance of MID is expressed as follows, where we assume $n=n^A=n^B$, $\Delta\pi_A(a \mid x) = \pi_A(a \mid x) - \pi_M(a \mid x)$, $\Delta\pi_B(a \mid x) = \pi_B(a \mid x) - \pi_M(a \mid x)$.
\begin{align*}
    Var[\Delta\hat{V}_{MID}] = &\underset{\text{(1)}}{\underline{Var[\hat{V}_{IPS}(\pi_A;\mathcal{D}_A) - \hat{V}_{IPS}(\pi_M;\mathcal{D}_A)]}} + \underset{\text{(2)}}{\underline{Var[\hat{V}_{IPS}(\pi_M;\mathcal{D}_B) - \hat{V}_{IPS}(\pi_B;\mathcal{D}_B)]}} \\
    (1) = &\mathbb{V}_{p(\mathcal{D}_A)}\bigg[\frac{1}{n}\sum_{i=1}^{n}\frac{\pi_A(a_i^A \mid x_i^A) - \pi_M(a_i^A \mid x_i^A)}{\pi_A(a_i^A \mid x_i^A)}r_i^A\bigg] \\
    = &\frac{1}{n}\mathbb{V}_{p(x)\pi_A(a \mid x)p(r \mid x, a)}\bigg[\frac{\Delta\pi_A(a \mid x)}{\pi_A(a \mid x)}r\bigg] \\ 
    = &\frac{1}{n}\bigg\{\mathbb{E}_{p(x)\pi_A(a \mid x)}\bigg[\mathbb{V}_{p(r \mid x, a)}\bigg[\frac{\Delta\pi_A(a \mid x)}{\pi_A(a \mid x)}r\bigg]\bigg] + \mathbb{V}_{p(x)\pi_A(a \mid x)}\bigg[\mathbb{E}_{p(r \mid x, a)}\bigg[\frac{\Delta\pi_A(a \mid x)}{\pi_A(a \mid x)}r\bigg]\bigg]\bigg\} \\
    = & \frac{1}{n}\bigg\{\mathbb{E}_{p(x)\pi_A(a \mid x)}\bigg[\frac{\Delta\pi_A^2(a \mid x)}{\pi_A^2(a \mid x)}\sigma^2(x, a)\bigg] + \mathbb{V}_{p(x)\pi_A(a \mid x)}\bigg[\frac{\Delta\pi_A(a \mid x)}{\pi_A(a \mid x)}q(x, a)\bigg]\bigg\} \\
    = & \frac{1}{n}\bigg\{\underset{\text{(i)}}{\underline{\mathbb{E}_{p(x)}\bigg[\sum_{a \in \mathcal{A}}\frac{\Delta\pi_A^2(a \mid x)}{\pi_A(a \mid x)}\sigma^2(x, a)\bigg]}} + \mathbb{V}_{p(x)\pi_A(a \mid x)}\bigg[\frac{\Delta\pi_A(a \mid x)}{\pi_A(a \mid x)}q(x, a)\bigg]\bigg\} \\
    = &\frac{1}{n}\bigg\{\text{(i)} + \mathbb{E}_{p(x)}\bigg[\mathbb{V}_{\pi_A(a \mid x)}\bigg[\frac{\Delta\pi_A(a \mid x)}{\pi_A(a \mid x)}q(x, a)\bigg]\bigg] + \mathbb{V}_{p(x)}\bigg[\mathbb{E}_{\pi_A(a \mid x)}\bigg[\frac{\Delta\pi_A(a \mid x)}{\pi_A(a \mid x)}q(x, a)\bigg]\bigg]\bigg\} \\
    = &\frac{1}{n}\bigg\{\text{(i)} + \mathbb{E}_{p(x)}\bigg[\mathbb{E}_{\pi_A(a \mid x)}\bigg[\frac{\Delta\pi_A^2(a \mid x)}{\pi_A^2(a \mid x)}q^2(x, a)\bigg] - \mathbb{E}_{\pi_A(a \mid x)}^2\bigg[\frac{\Delta\pi_A(a \mid x)}{\pi_A(a \mid x)}q(x, a)\bigg]\bigg] + \mathbb{V}_{p(x)}\bigg[\mathbb{E}_{\pi_A(a \mid x)}\bigg[\frac{\Delta\pi_A(a \mid x)}{\pi_A(a \mid x)}q(x, a)\bigg]\bigg] \bigg\} \\
    = &\frac{1}{n}\bigg\{\text{(i)} + \underset{\text{(ii)}}{\underline{\mathbb{E}_{p(x)}\bigg[\sum_{a \in \mathcal{A}}\frac{\Delta\pi_A^2(a \mid x)}{\pi_A(a \mid x)}q^2(x, a)\bigg]\bigg]}} - \mathbb{E}_{p(x)}\bigg[\mathbb{E}_{\pi_A(a \mid x)}^2\bigg[\frac{\Delta\pi_A(a \mid x)}{\pi_A(a \mid x)}q(x, a)\bigg]\bigg] + \mathbb{V}_{p(x)}\bigg[\mathbb{E}_{\pi_A(a \mid x)}\bigg[\frac{\Delta\pi_A(a \mid x)}{\pi_A(a \mid x)}q(x, a)\bigg]\bigg] \bigg\} \\
    = &\frac{1}{n}\bigg\{\text{(i)} + \text{(ii)} \underset{=0}{\underline{- \mathbb{E}_{p(x)}\bigg[\mathbb{E}_{\pi_A(a \mid x)}^2\bigg[\frac{\Delta\pi_A(a \mid x)}{\pi_A(a \mid x)}q(x, a)\bigg]\bigg] + \mathbb{E}_{p(x)}\bigg[\mathbb{E}_{\pi_A(a \mid x)}^2\bigg[\frac{\Delta\pi_A(a \mid x)}{\pi_A(a \mid x)}q(x, a)\bigg]\bigg]}} - \mathbb{E}_{p(x)}^2\bigg[\mathbb{E}_{\pi_A(a \mid x)}\bigg[\frac{\Delta\pi_A(a \mid x)}{\pi_A(a \mid x)}q(x, a)\bigg]\bigg] \bigg\} \\
    = &\frac{1}{n}\bigg\{\text{(i)} + \text{(ii)} - \bigg(\mathbb{E}_{p(x)\pi_A(a \mid x)}\bigg[\frac{\Delta\pi_A(a \mid x)}{\pi_A(a \mid x)}q(x, a)\bigg]\bigg)^2 \bigg\} \\
    = &\frac{1}{n}\bigg\{\text{(i)} + \text{(ii)} - \bigg(V(\pi_A) - V(\pi_M)\bigg)^2 \bigg\} \\
    = &\frac{1}{n}\bigg\{\mathbb{E}_{p(x)}\bigg[\sum_{a \in \mathcal{A}}\bigg(\frac{\Delta\pi_A^2(a \mid x)}{\pi_A(a \mid x)}\bigg)\big(q^2(x, a) + \sigma^2(x, a)\big)\bigg] - \bigg(V(\pi_A) - V(\pi_M)\bigg)^2 \bigg\} \\
    (2) = &\frac{1}{n}\bigg\{\mathbb{E}_{p(x)}\bigg[\sum_{a \in \mathcal{A}}\bigg(\frac{\Delta\pi_B^2(a \mid x)}{\pi_B(a \mid x)}\bigg)\big(q^2(x, a) + \sigma^2(x, a)\big)\bigg] - \bigg(V(\pi_B) - V(\pi_M)\bigg)^2 \bigg\} \\
    \therefore Var&[\Delta\hat{V}_{MID}] = \frac{1}{n}\bigg\{\mathbb{E}_{p(x)}\bigg[\sum_{a \in \mathcal{A}}\bigg(\frac{\Delta\pi_A^2(a \mid x)}{\pi_A(a \mid x)} + \frac{\Delta\pi_B^2(a \mid x)}{\pi_B(a \mid x)}\bigg)\big(q^2(x, a) + \sigma^2(x, a)\big)\bigg] - \bigg(V(\pi_A) - V(\pi_M)\bigg)^2 - \bigg(V(\pi_B) - V(\pi_M)\bigg)^2\bigg\}
\end{align*}

\subsection{Proof of Theorem \ref{thm:mid-policy}}
\begin{proof}
The variance of MID consists of the following reward term and performance term.
\begin{align*}
    &R(x, a; \Delta\hat{V}_{MID}) = \bigg(\frac{(\pi_A(a \mid x)-\pi_M(a \mid x))^2}{\pi_A(a \mid x)} + \frac{(\pi_B(a \mid x)-\pi_M(a \mid x))^2}{\pi_B(a \mid x)}\bigg)\big(q^2(x, a) + \sigma^2(x, a)\big)\\
    &\text{Per}(\Delta\hat{V}_{MID}) = \big(V(\pi_A) - V(\pi_M)\big)^2 + \big(V(\pi_B) - V(\pi_M)\big)^2
\end{align*}
As discussed in the main text, since the performance term is unknown, we derive the optimal middle algorithm $\pi_{M^*}$ by minimizing the reward term to reduce the selection error rate.
Specifically, we derive this algorithm by differentiating the reward term and setting it to zero.
\begin{gather}
    \frac{d}{d\pi_M}R(x, a; \Delta\hat{V}_{MID}) = -\frac{2(\pi_A(a \mid x) - \pi_M(a \mid x))}{\pi_A(a \mid x)} -\frac{2(\pi_B(a \mid x) - \pi_M(a \mid x))}{\pi_B(a \mid x)} \notag\\
    \therefore \pi_{M^*}(a \mid x) = \frac{2\pi_A(a \mid x)\pi_B(a \mid x)}{\pi_A(a \mid x) + \pi_B(a \mid x)} \notag
\end{gather}
\end{proof}

\subsection{Proof of Theorem \ref{thm:cond-unbiased}}
\begin{proof}
We prove that MID is unbiased, provided that a middle algorithm $\pi_M(a \mid x)$ satisfies the following conditions.
\begin{align}
    \pi_A(a \mid x) = 0 \oplus \pi_B(a \mid x) = 0 \Rightarrow \pi_M(a \mid x) = 0 \notag
\end{align}
Let $\mathcal{A}_A, \mathcal{A}_B$ be the supports of algorithms $\pi_A(a \mid x), \pi_B(a \mid x)$ (i.e., the action sets with positive probabilities).
We consider the conditions on the action sets of the middle algorithm $\mathcal{A}_M$ required for MID to be unbiased.
Then, the expected value of MID is given as follows.
\begin{align*}
    \mathbb{E}_{p(\mathcal{D}_A, \mathcal{D}_B)}[\Delta\hat{V}_{MID}]
    &= \underset{\text(1)}{\underline{\mathbb{E}_{p(\mathcal{D}_A)}[\hat{V}_{IPS}(\pi_A; \mathcal{D}_A) - \hat{V}_{IPS}(\pi_M; \mathcal{D}_A)]}} + \underset{\text{(2)}}{\underline{\mathbb{E}_{p(\mathcal{D}_B)}[\hat{V}_{IPS}(\pi_M; \mathcal{D}_B) - \hat{V}_{IPS}(\pi_B; \mathcal{D}_B)]}}, \\
    (1) &= \mathbb{E}_{p(x)\pi_A(a \mid x)p(r \mid x, a)}\bigg[\frac{\pi_A(a \mid x)}{\pi_A(a \mid x)}r\bigg] - \mathbb{E}_{p(x)\pi_A(a \mid x)p(r \mid x, a)}\bigg[\frac{\pi_M(a \mid x)}{\pi_A(a \mid x)}r\bigg] \\
    &= V(\pi_A) - \mathbb{E}_{p(x)}\bigg[\sum_{a \in \mathcal{A}_A}\pi_M(a \mid x)q(x, a)\bigg]\\
    &= V(\pi_A) - V(\pi_M) + \mathbb{E}_{p(x)}\bigg[\sum_{a \in \overline{\mathcal{A}_A} \cap \mathcal{A}_M}\pi_M(a \mid x)q(x, a) \bigg]\\
    &= V(\pi_A) - V(\pi_M) + \mathbb{E}_{p(x)}\bigg[\sum_{a \in \overline{\mathcal{A}_A} \cap \mathcal{A}_B \cap \mathcal{A}_M}\pi_M(a \mid x)q(x, a) \bigg] + \mathbb{E}_{p(x)}\bigg[\sum_{a \in \overline{\mathcal{A}_A} \cap \overline{\mathcal{A}_B} \cap \mathcal{A}_M}\pi_M(a \mid x)q(x, a) \bigg], \\
    (2) &= V(\pi_M) - V(\pi_B) - \mathbb{E}_{p(x)}\bigg[\sum_{a \in \overline{\mathcal{A}_B} \cap \mathcal{A}_M}\pi_M(a \mid x)q(x, a) \bigg]\\
    &= V(\pi_M) - V(\pi_B) - \mathbb{E}_{p(x)}\bigg[\sum_{a \in \mathcal{A}_A \cap \overline{\mathcal{A}_B} \cap \mathcal{A}_M}\pi_M(a \mid x)q(x, a) \bigg] - \mathbb{E}_{p(x)}\bigg[\sum_{a \in \overline{\mathcal{A}_A} \cap \overline{\mathcal{A}_B} \cap \mathcal{A}_M}\pi_M(a \mid x)q(x, a) \bigg], \\
    \therefore \mathbb{E}_{p(\mathcal{D}_A,\mathcal{D}_B)}[\Delta\hat{V}_{MID}] &= V(\pi_A) - V(\pi_B) + \mathbb{E}_{p(x)}\bigg[\sum_{a \in \overline{\mathcal{A}_A} \cap \mathcal{A}_B \cap \mathcal{A}_M}\pi_M(a \mid x)q(x, a) \bigg] - \mathbb{E}_{p(x)}\bigg[\sum_{a \in \mathcal{A}_A \cap \overline{\mathcal{A}_B} \cap \mathcal{A}_M}\pi_M(a \mid x)q(x, a) \bigg]
\end{align*}
Thus, to eliminate the bias of MID, we require the middle algorithm's action sets $\mathcal{A}_M$ such that $\mathcal{A}_A \cap \overline{\mathcal{A}_B} \cap \mathcal{A}_M = \emptyset$ and $\overline{\mathcal{A}_A} \cap \mathcal{A}_B \cap \mathcal{A}_M = \emptyset$.
In other words, MID becomes unbiased if $\pi_M(a \mid x)$ is zero where either $\pi_A(a \mid x)$ or $\pi_B(a \mid x)$ is zero.
\end{proof}

\section{Additional Experimental Results} \label{app:additional}
In this section, we show the results of algorithm selection experiments under various conditions that are omitted from the main text.

Figure \ref{fig:result-reward-noise} illustrates the robustness of each estimator against reward noise by comparing how the error rate changes as the magnitude of noise added to the observed rewards varies.
In real-world video recommendation tasks, observed rewards (e.g., watch ratio) contain noise due to various factors.
For instance, a user's viewing behavior depends on their mood, the time of day, and the viewing environment.
Consequently, an environment with high reward noise corresponds to a scenario in which the reliability of these observed rewards is low.
In this experiment, we verify the capability of each estimator to make accurate decisions under such noisy conditions.
The experimental results demonstrate that the proposed method exhibits superior robustness against reward noise.
First, higher reward noise leads to increased variance and the higher error rate for all estimators.
However, focusing on the rate of variance increase relative to the noise, the proposed method exhibits a moderate increase.
This allows the proposed method to maintain the lower error rate than AVG and IPS in all conditions, ensuring its robustness even when noise levels are extremely high.
The robustness against reward noise is also attributed to the variance reduction effect of the proposed method demonstrated in this paper.
It indicates that even in real-world scenarios with unreliable data, the proposed method provides a more stable decision-making capability compared to conventional A/B testing.

Figure \ref{fig:result-action-num} compares how the error rate of each estimator changes as we vary the number of actions for the recommendation algorithm.
In the context of real-world video recommendation, the number of actions corresponds to the size of the action space, namely the total number of videos available for recommendation.
The purpose of this experiment is to verify a known weakness of IPS.
Typically, as the number of actions increases, the selection probability for each action relatively decreases, making the estimator susceptible to explosive variance growth due to importance weights.
This leads to unstable estimates and makes correct algorithm selection more difficult.
In this experiment, we verify the robustness of the proposed method against increases in the number of actions.
The experimental results demonstrate that the proposed method effectively suppresses the variance increase derived from the large number of actions.
With IPS, variance increases rapidly when the number of actions exceeds approximately 25, leading to a significant increase in the error rate.
In contrast, MID reduces the variance associated with a larger number of actions, despite also using importance weights.
This is also attributed to the introduction of the middle algorithm $\pi_M$, which avoids extreme importance weights and stabilizes the estimation.
Additionally, MID keeps variance lower than that of AVG, resulting in the consistently lower error rate for any number of actions.
Thus, our method allows for stable and accurate decision-making even in large action spaces where IPS fails.
Furthermore, the results imply that our estimator is superior to AVG across various numbers of actions.

\begin{figure*}
\centering
\includegraphics[width=\linewidth]{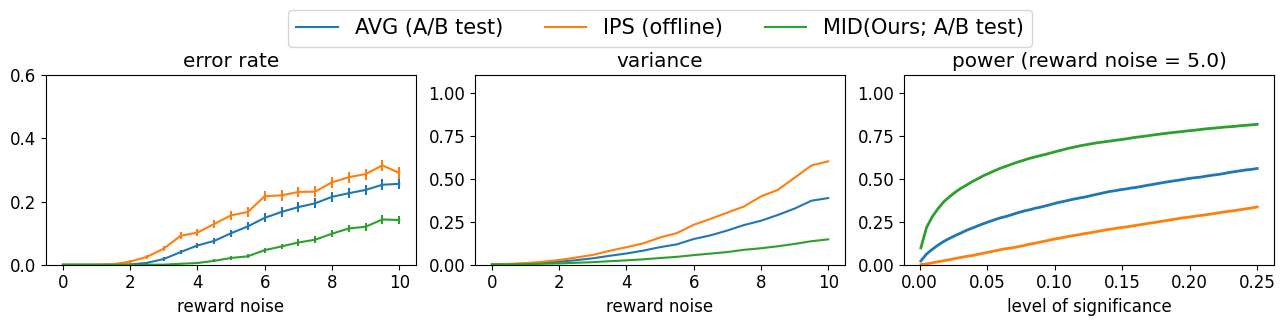}
\vspace{-3mm}
\caption{Comparison of the estimator's error rate, variance, and statistical power with varying reward noises}
\vspace{-3mm}
\label{fig:result-reward-noise}
\end{figure*}

\begin{figure*}
\centering
\includegraphics[width=\linewidth]{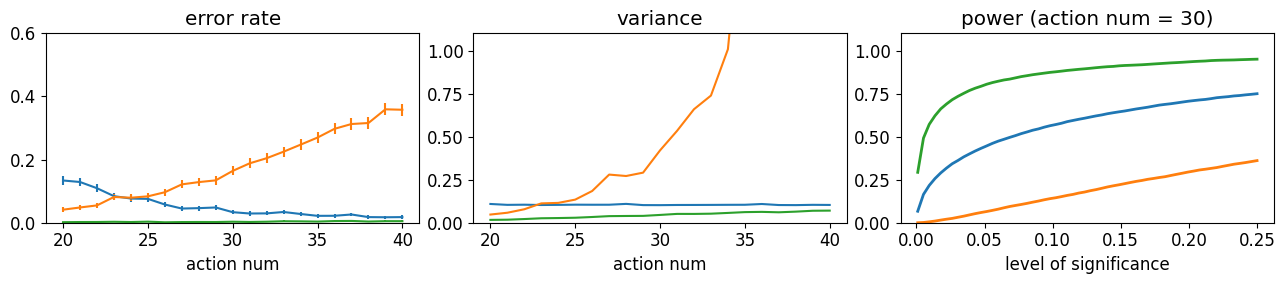}
\vspace{-3mm}
\caption{Comparison of the estimator's error rate, variance, and statistical power with varying number of actions}
\vspace{-3mm}
\label{fig:result-action-num}
\end{figure*}

\end{document}